\newtheorem{theorem}{Theorem}
\theoremstyle{definition}
\newtheorem{definition}{Definition}
\begin{document}
\title{Interpretable Fault Diagnosis of Rolling Element Bearings with  Temporal Logic Neural Network}
\author{
	\vskip 1em
	Gang Chen,  
	Yu Lu, Rong Su, \emph{Senior Member, IEEE} and Zhaodan Kong, \emph{Member, IEEE}
	\thanks{
		Manuscript received Month xx, 2xxx; revised Month xx, xxxx; accepted Month x, xxxx.
		
		The support from Singapore National Research Foundation Delta-NTU Corporate Lab Program (SMA-RP14) and from Singapore Ministry of Education AcRF Tier 1 2018-T1-001-245 (RG 91/18) are gratefully acknowledged.
		
		Gang Chen, Yu Lu and Rong Su are  with school of electrical and electronic engineering, Nanyang Technological University, Singapore 639798 (e-mail:gang.chen@ntu.edu.sg,   rsu@ntu.edu.sg).
		
		Zhaodan Kong is with the  school of mechanical and aerospace engineering, University of California, Davis, Davis, CA, 95616 (zdkong@ucdavis.edu).
 
	}
}

\maketitle
	
\begin{abstract}
Machine learning-based methods have achieved successful applications in machinery fault diagnosis. However, the main limitation that exists for these methods is that they operate as a black box and are generally not interpretable. This paper proposes a novel neural network structure, called temporal logic neural network (TLNN), in which the neurons of the network are logic propositions.  More importantly, the network can be described and interpreted as a weighted signal temporal logic. TLNN not only keeps the nice properties of traditional neuron networks but also provides a formal interpretation of itself with formal language. Experiments with real datasets show the proposed neural network can obtain highly accurate fault diagnosis results with good computation efficiency. Additionally, the embedded formal language of the neuron network can provide explanations about the decision process, thus achieve interpretable fault diagnosis.
\end{abstract}

\begin{IEEEkeywords}
Interpretable fault diagnosis,    logic embedded neural network, rolling element bearings, temporal logic.
\end{IEEEkeywords}

\markboth{}%
{}

\section{Introduction}
In the past few decades, there is a trend that modern equipments   are becoming more intelligent, sophisticated, and integrated. The increasing complexity of these equipments demands fault diagnosis of these crucial components to guarantee their safety and reliability \cite{haidong2018intelligent}. For instance, rolling element bearings are widely used components in modern rotary machines. Their faults may lead to the fatal breakdown of modern machines. Therefore, intelligent fault diagnosis for rolling element bearings is urgently needed to prevent catastrophic failures, enhance safety and reliability, and reduce maintenance costs.

Nowadays, with the increasing volume of industrial data and the rapid development of artificial intelligence (AI) and machine learning (ML) techniques, fault diagnosis for bearings with machine learning algorithms is receiving more and more attention \cite{yang2020interpreting}. Compared with traditional fault diagnosis methods, which need human experts to manually extract features or interpret the collected data, fault diagnosis with machine learning algorithms are data-driven. These methods use ML algorithms to learn the features automatically from the collected data, which replaces the tedious feature extraction procedure by experts. Especially in recent years, deep learning (DL) methods have achieved great success in pattern recognition \cite{goodfellow2016deep}, and have attracted intensive attention in the field of fault diagnosis for bearings\cite{he2017deep}. Many kinds of neural network structures have been proposed and applied to fault diagnosis, such as deep belief network (DBN) \cite{shao2017electric}, convolutional neural network (CNN) \cite{zhao2019deep}, long-short-term memory network (LSTM) \cite{hao2020multisensor}, and autoencoder (AE) \cite{chen2017multisensor}. Tested on some open-source  fault diagnosis datasets, DL-based fault diagnosis methods can usually  achieve good results and outperform other methods. 

Although the DL-based methods have achieved  many successes in fault diagnosis, there are still some remaining issues needed to be tackled associated with them. Usually, these methods consist of three main steps: 1) sensor signal acquisition; 2) feature extraction and selection;  3) fault classification. The DL neural networks are used for steps 2) and 3). The main limitation of these methods in fault diagnosis is that they operate as a black box and are not interpretable, i.e., they do not provide the explanations about \textit{how and why} they make a decision and reveal the fault mechanism.  However, in practice, such interpretations are important since they are useful not only for the trustworthiness of the decision itself but also for further active maintenance.
  
Many scholars have tried to propose interpretable neural networks to equip the fault diagnosis results with interpretability \cite{zhao2021interpretable, abid2019robust}. Interpretability means understanding and explaining neural network models. However, to the best of our knowledge, there is neither complete theory and method of pure mathematical analysis nor formal mathematical definition for interpretability, at least in the context of DL-based fault diagnosis \cite{yang2020interpreting}. To devise an interpretable neural network for intelligent fault diagnosis, most existing works use visualization to connect   key explainable features to the decision  process and provide a visual and empirical explanation for users. The goal of the interpretation is to obtain trust from the users, by making the users   feel the decision made by the network is in line with their understanding of the physical process. For example, Firas Ben Abid \textit{et al.} \cite{abid2019robust}   proposed a new DL architecture called deep-SincNet, which utilizes a traditional end-to-end scheme as usual, but it can automatically learn  interpretable fault features from the raw signals and accordingly finalizes the fault diagnosis process.  Ahmed Ragab \textit{et al.} \cite{ragab2018fault}  applied the Logical Analysis of Data (LAD) for fault diagnosis in industrial chemical processes, in which a machine learning  technique is used to discover   interpretable patterns, which can be linked to underlying physical phenomena and provide interpretation of the decision process.  John Grezmak  \textit{et al.} \cite{grezmak2019interpretable} used the  time-frequency spectra images to train a CNN, and then applied Layer-wise Relevance Propagation (LRP) to    build the connection about which values in the input signal contribute the most to the diagnosis results, thereby providing an improved understanding of how the CNN makes decision about fault classification.

Visualization-based DL can provide initial interpretations, but the interpretations are empirical and not formal, where the users need many numerical experiments  and background knowledge to check the relationship between visualization and physical process and  approach an interpretation of the results. Moreover, such interpretations are mostly based on only one layer feature of the DL, which cannot provide an explanation about how the decision is made formally. To address these issue, interpretable fault diagnosis based on formal languages has attracted much attention in the field of monitoring \cite{chen2020frequency,chen2020formal}. Formal language-based fault diagnosis methods try to use a neural network to learn a formal language, e.g., temporal logic \cite{deshmukh2017robust} and spectral temporal logic \cite{chen2020frequency}, to describe the fault behaviours among the signals. The learned formal languages can classify the faults,  be understood by users, and give an explanation about the results, where the faulty and normal signals can be easily described by a formal but human readable sentences.

Fault diagnosis with formal language has been widely used in monitoring tasks \cite{chen2020formal, dokhanchi2014line} in the past decade, which has achieved great success. However, there are many challenges to apply  formal language to interpretable fault diagnosis tasks for existing methodologies. Firstly, many methods assume the structure of formal languages are given or assume the formal languages are constructed with a set of given atomic words, usually defined by experts, but with unknown parameters. As such neural network is used to learn the optimal parameters for the languages, which is not applicable and too conservative for modern complex equipment. Secondly,   learning a formal language with neural network only focuses on fault diagnosis performance with testing dataset.  Even though the formal language is interpretable,   the learning process itself is also a black-box process, which is not fully interpretable as the visualization-based methods, thus cannot provide a formal guarantee for the performance.

To address the aforementioned issues, this paper proposes a new framework to infer formal languages for fault diagnosis of rolling element bearings. In the proposed framework, we do not learn the formal language directly, but the language is embedded in the neural network and encoded by the parameters of the neural network. The proposed neural network is called temporal logic neural network (TLNN). After the network is trained with training data, we can formally map the parameters of the network to a temporal logic formula.  Moreover, the activation function of the network is related to the semantics of a formal language, which provides an explanation for the overall network with the formal language.  Compared with state-of-the-art methods, our contributions are as follows.
\begin{enumerate}
\item We propose a new neural network and its parameters learning method, called TLNN, whose structure can be mapped to a formal language, thus it is interpretable with formal languages.  
\item We apply the proposed neural network to fault diagnosis with real data sets, and the results show the proposed neural network can learn temporal logic formulas for fault diagnosis and the network itself can be interpreted with the formulas.

\end{enumerate}

The layout of this paper is as follows: Section \ref{sub:prelim}   introduces  the preliminaries of this paper and defines the atom temporal logic. Section \ref{sub:TLNN} defines the structure of the TLNN. Section \ref{sec:learning} introduces the learning process and gradients for each parameters in TLNN. Section \ref{sub:case} demonstrates the proposed method with real data sets and compares the proposed method with state-of-the-art techniques, and Section \ref{sub:conclusion} gives the conclusions.

\section{Preliminaries}
\label{sub:prelim}
\subsection{Weighted Signal Temporal Logic}

Denote $A$ and $B$ to be two sets, and  $\mathcal{F}(A,B)$ to be the set of all functions from $A$ to $B$. Denote a time domain to be $\mathbb{D}:=\{k\tau_{0}|k\in Z_{\geq 0}\}$, then a \emph{discrete-time, continuous-valued signal} is a function $x \in \mathcal{F}(\mathbb{D},\mathbb{R}^n)$. For example,   $x(t)$   denotes the value of signal $x$ at time $t$. Moreover, we use the terms ``signal'' and ``time series'' interchangeably. 

\begin{definition}
Signal temporal logic (STL) is a temporal logic defined over signals \cite{chen2020temporal}. Its syntax is defined recursively as: 
\begin{equation}
\varphi ::= \mu| \varphi_1 \wedge \varphi_2 | \varphi_1 \vee \varphi_2 | \lozenge_{[\tau_{1},\tau_{2})} \varphi | \square_{[\tau_{1},\tau_{2})} \varphi,
\label{eq:STL}
\end{equation}
where $\tau_{1}\leq \tau_{2}$ and $\tau_{1},\tau_{2}\in \mathbb{Z}_{\geq 0}$ are indexes for time, and $\mu$ is a predicate over a signal,  defined as $f(x(t))\sim c$ with $f \in \mathcal{F}(\mathbb{R}^n,\mathbb{R})$ being a function, $\sim \in \{  <, \geq\}$, and $c \in \mathbb{R}$ being a constant. The Boolean operators $\vee$ and $\wedge$ are disjunction (``or'') and conjunction (``and''), respectively. The temporal operators before STL formula $\varphi$, $\lozenge_{[\tau_{1},\tau_{2}]} \varphi$ and $\square_{[\tau_{1},\tau_{2}]} \varphi$, stand for ``eventually'' (to be true at least once) formula $\varphi$ being true between time interval $[\tau_{1},\tau_{2}]$ and ``always'' (to be true all the time) formula $\varphi$ being true between time interval $[\tau_{1},\tau_{2}]$, respectively.  
\end{definition}
In practice, using temporal logic to describe the behaviors of a system involves using a set of sub-specifications with different importance or priories. The expressiveness of traditional STL with the quantitative semantic defined in \cite{chen2020temporal} does not allow for specifying such priorities, since all the sub-specifications are equally important \cite{mehdipour2020specifying}. Moreover, the satisfaction status of traditional STL depends on critical points of the signal under investigation, caused by min and max operators, which is sensitive to noise and not differentiable, thus restricting the application of STL to dynamical systems in noisy environments. To address these issues, assigning different weights to different sub-specifications has been proposed in \cite{mehdipour2020specifying,varnai2020robustness,gilpin2020smooth,mehdipour2019arithmetic}.
These extensions of STL can be generalized as weighted STL (wSTL), the syntax of which can be defined as follows.
\begin{definition}
(\textit{wSTL Syntax} \cite{mehdipour2020specifying} ): The syntax of wSTL is modified from that of the traditional STL as follows:
\begin{equation}
\varphi:=\mu |\neg\varphi|\wedge_{i=1:N}^{w}\varphi_{i}|\vee_{i=1:N}^{w}\varphi_{i}|\lozenge_{[\tau_{1},\tau_{2})}^{w} \varphi | \square_{[\tau_{1},\tau_{2})}^{w} \varphi,
\label{eqn:syntax}
\end{equation}
where the predicate $\mu$ and all the Boolean and temporal operators have the same semantics as in STL, except that there exists a weight $w_{i}$  assigned to each sub-formula $\varphi_{i}$ and $w=[w_{i}]_{i=1:N}\in \mathbb{R}^{N}_{\geq 0}$
\end{definition}
wSTL is  also equipped with quantitative semantics as STL, which  are defined as follows.
\begin{definition}
Given a wSTL formula $\varphi$ and  a signal $x$, the weighted robustness degree $\rho^{w}(x,\varphi, t)$ at time $t$ is recursively defined as follows:
\begin{equation*}
\begin{array}{lll}
\rho^{w}(x, f(x)\geq c,  t) &:= \frac{w}{2} (f(x(t))-c),\\
\rho^{w}(x, f(x)< c,  t) &:= \frac{w}{2} (c - f(x(t))),\\
\rho^{w}(x, \neg\varphi, t)&: = -\rho^{w}(x, \varphi, t),\\
\rho^{w}(x, \wedge_{i=1:N}^{w}\varphi_{i}, t)&: = g^{\wedge}(w,[\rho^{w}(x,\varphi_{i},t)]_{i=1:N}),\\

\rho^{w}(x, \vee_{i=1:N}^{w}\varphi_{i},t)&:=g^{\vee}(w,[\rho^{w}(x,\varphi_{i},t)]_{i=1:N}),\\
\rho^{w}(x, \square_{[\tau_{1},\tau_{2}]}^{w}\varphi, t)&:=g^{\square}(w, [\rho^{w}(x,\varphi,t)]_{t'\in [t+\tau_{1},t+\tau_{2}]}),\\
\rho^{w}(x, \lozenge_{[\tau_{1},\tau_{2}]}^{w}\varphi, t)&:=g^{\lozenge} (w,[\rho^{w}(x,\varphi,t)]_{t'\in [t+\tau_{1},t+\tau_{2}]}),\\
\end{array}
\label{eqn:weightedrobust}
\end{equation*}
where $g^{\wedge}:\mathbb{R}_{\geq 0}\times \mathbb{R}^{N}\rightarrow \mathbb{R}$, $g^{\vee}:\mathbb{R}_{\geq 0}^{N}\times\mathbb{R}^{N}\rightarrow \mathbb{R}$, $g^{\square}:\mathbb{R}^{\tau_{2}-\tau_{1}}_{\geq 0}\times \mathbb{R}^{\tau_{2}-\tau_{1}}\rightarrow \mathbb{R}$, and  $g^{\lozenge}:\mathbb{R}^{\tau_{2}-\tau_{1}}_{\geq 0}\times \mathbb{R}^{\tau_{2}-\tau_{1}}\rightarrow \mathbb{R}$ are activation functions associated with $\wedge, \vee, \square$ and $\lozenge$ operators, respectively. The activation functions $\Upsilon =(g^{\wedge},g^{\vee},g^{\square},g^{\lozenge})$ define the robustness degree of wSTL and determine the properties of wSTL, e.g., soundness as defined in   \cite{mehdipour2020specifying}.
\end{definition}

\section{Temporal Logic Neural Network}
\label{sub:TLNN}
This section introduces the structure of a multiple-input single-output temporal logic neural network (TLNN). Fig. \ref{fig:TLNN} shows the proposed five-layer TLNN structure. The proposed TLNN maps a signal  $x = x(0),\cdots,x(t),\cdots,x(n)$ to a weighted robustness degree with the wSTL formula embedded in the neural network. The detailed mathematical functions of each layer are introduced hereinafter.

\begin{figure}[hbtp]
\centering
\includegraphics[scale=0.35]{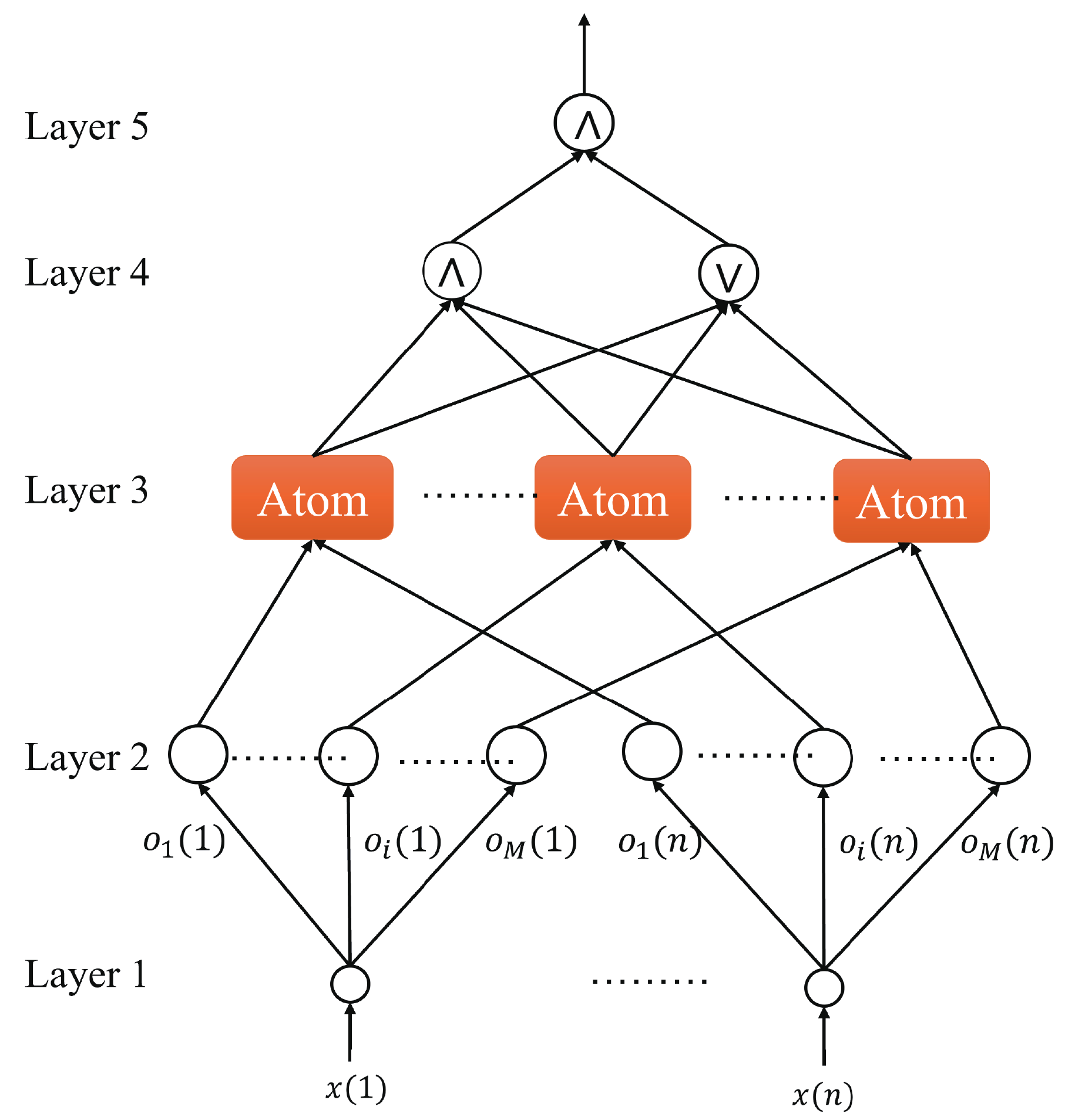}
\caption{Proposed five-layer temporal logic neural network (TLNN) structure, where each computation rule in Layer 3 formulates a set of  wSTL sub-formula and these sub-formulas in Layer 3  are sequentially combined with ``and'' or ``or'' operators in layer 4 and layer 5.}
\label{fig:TLNN}
\end{figure}

\textbf{Layer 1} (Input layer): This layer contains $n$ neurons, where $n$ is the length of the signal. Each neuron  has $M$ output values, where $M$ is the number of predicates encoded by the neural network. The ith output value for  the  neuron $t$, denoted as $o_{i}(t)$ and defined as:
\begin{equation}
o_{i}(t) = x(t), i=1,2,\cdots, M, t= 0,1,\cdots, n
\end{equation}
where the input signal is $x =x(0),x(1),\cdots,x(n)$.

\textbf{Layer 2} (Predicate layer): Each neuron in this layer is a mapping function that generates a predicate  in (\ref{eqn:syntax}), which is defined as:
\begin{equation}
\mu_{i}(t)=f (x(t))-W_{i}^{1}(t)
\end{equation}
where $W_{i}^{1}(t)$ is the weight  assigned to the $i-th$ output of   neuron $t$. In this paper, based on the syntax and semantics of  wSTL, the weight vector $W^1=[W_{1}^{1},W_{2}^{1},\cdots,W_{M}^{1}]$ for each neuron to be the same, i.e., $W_{i}^{1}(1)=W_{i}^{1}(2),\cdots,=W_{i}^{1}(n)$.  

\textbf{Layer 3} (Atomic formula layer): This layer defines $M$ atomic formulas (sub-formulas) that construct  the final wSTL formula. This layer consists of four type of neurons, i.e., ``always'' neuron, ``eventually'', ``eventually always'' neuron, and ``always eventually'', respectively. Each neuron defines a formula with the form $\square_{[\tau_{1},\tau_{2}]}^{w}\varphi_{i}$, $\lozenge_{[\tau_{1},\tau_{2}]}^{w}\varphi_{i}$,  $\square_{[0, \tau_{0}]}\lozenge_{[\tau_{1},\tau_{2}]}^{w}\varphi_{i}$ or $\lozenge_{[0,\tau_{0}]}\square_{[\tau_{1},\tau_{2}]}^{w}\varphi_{i}$. The inside of the neuron is shown in Fig.\ref{fig:atomic}, which includes an autoencoder and an activation function $g(f,W)$. The activation  functions are chosen from $(g^{\wedge},g^{\vee},g^{\square},g^{\lozenge})$ as defined in the weighted robustness degree definition and their combinations. The encoder maps the inputs from layer 2 to a temporal interval $[\tau_{1},\tau_{2}]$, and the decoder maps the interval to a matrix $W$ that will be used by the activation function, which will be described in detail next.

\textbf{(1)} \underline{$\square_{[\tau_{1},\tau_{2}]}^{w}\varphi_{i}$}: In this case, the encoder is a two-layer feed-forward neuron network followed by a quantization operator. Denote the input to the encoder for the $i-th$ neuron in layer 3 as $\rho^{2}_{i}\in\mathcal{R}^{1\times n}$, then the output of this neuron is 
\begin{equation}
\begin{bmatrix}
h_{1}\\
h_{2}
\end{bmatrix}
=E(\theta_{e}, \rho^{2}_{i})
\end{equation}
where $\theta_{e}$ is the parameter  of the encoder, $h_{1}, h_{2}\in\mathbb{R}$ are the two values that encode  the time interval and $\rho^{2}_{i}$ is the robustness from Layer 2.  Since we assume  discrete time values, the time interval should be quantized into integer values with the quantization operator $Q_{Z}(h)$ defined in \cite{gong2019differentiable}. Given the bit width $b$ and the real value $h$, which is in the range $(l, u)$, the complete quantization  process can be defined as:
\begin{equation}
Q_{Z}(h) = round(\frac{h}{\bigtriangleup})\bigtriangleup
\label{eqn:quantization}
\end{equation}
where the original range $(l, u)$ is divided into  $2^{b}-1$ intervals $\mathcal{P}_{i}, i\in [0,1,\cdots,2^{b}-1]$, and $\bigtriangleup = \frac{u-l}{2^{b}-1}$.   The derivative of the quantization function is zero almost everywhere, which makes the training process unstable and decreases the learning accuracy. To address this issue, the soft quantization is used during training process as follows.

\begin{figure}[hbtp]
\centering
\includegraphics[scale=0.5]{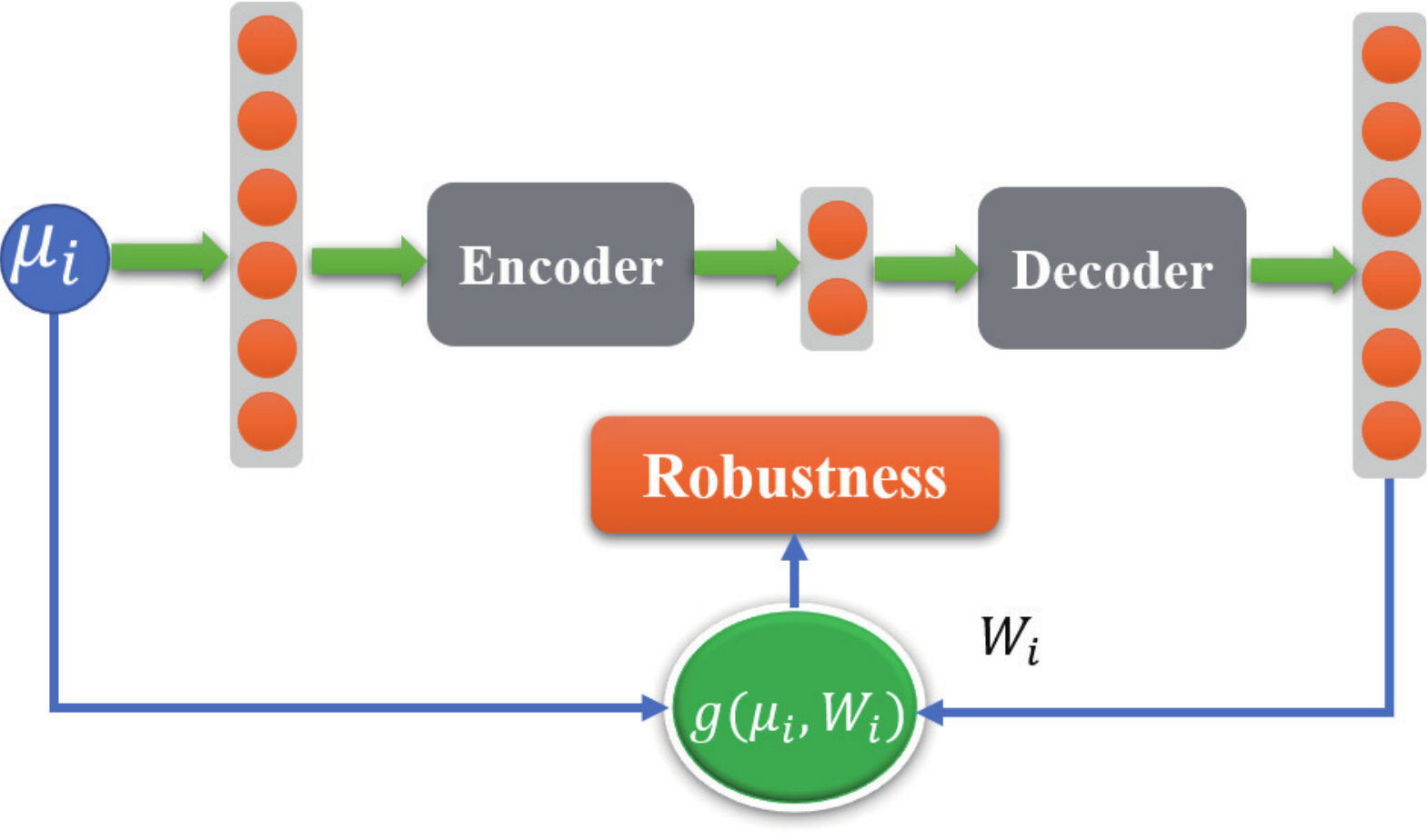}
\caption{Atomic formula construction framwork in Layer 3 of the TLNN.}
\label{fig:atomic}
\end{figure}

\begin{equation}
Q(x) =
\left\lbrace 
\begin{array}{lll}
l, x<l\\
u, x>u\\
l+\bigtriangleup (i+\frac{\kappa (x)+1}{2}), x\in\mathcal{P}_{i}\\
\end{array}\right. 
\end{equation}
where $l$ and $u$ are the lower and upper bounds of the quantization. $\mathcal{P}_{i}$ is a   quantitative interval and $\kappa (h)$ is a differentiable asymptotic function that has good approximation performance, defined as 
\begin{equation}
\kappa (x) = \frac{1}{tanh(0.5k\bigtriangleup)} tanh(k(x-l-(i+0.5)\bigtriangleup)),
\label{eqn:quantizationsoft}
\end{equation}
where $k$ is a coefficient that determines the shape
of the asymptotic function. Then the time interval is defined as  $\tau_{1} = Q_{Z}(h_{1})$ and $\tau_{2} = Q_{Z}(h_{1})+Q_{Z}(h_{2})$, where the lower and upper bounds are set to 0 and $n$, respectively. 

The decoder is also a two-layer feed-forward neural network followed with a mapping. The output of the decoder is a vector $W^{2} \in \mathbb{R}^{n \times 1}$, and the $j-th$ element of  is defined as 
\begin{equation}
W^{2}_{j} = D(\theta_{d},[\tau_{1},\tau_{2}])_{j},
\label{enq:indxmap}
\end{equation}
where $D(\theta_{d},[\tau_{1},\tau_{2}])$ is the output of the feed-forward neural network and $\theta_{d}$ is the parameters of the decoder network. 

With the output of layer 2 and the output of the autoencoder, the activation function can be defined as:
\begin{equation}
g^{\square}(\rho^2,W^{2}) = \rho^{w}(x, \square_{[\tau_{1},\tau_{2}]}^{w^{2}_{i}}\rho^{2}_{i}, t)
\end{equation}
where $\rho^{w}(x, \square_{[0,n]}^{w^{2}_{i}}\rho^{2}_{i}, t)$ is defined in  (\ref{eqn:weightedrobustness}), which is inspired by the arithmetric-geometric integral mean robustness in \cite{mehdipour2019arithmetic}. In  (\ref{eqn:weightedrobustness}), $[q]_{+} =\left\lbrace \begin{array}{ll}
q, q>0\\
0 \text{ otherwise}
\end{array}\right. $ and $[q]_{-} = -[-q]_{+}$, where $q:\mathbb{R}^{n}\rightarrow \mathbb{R}$ is a real function and $q =[q]_{+}+[q]_{-1}$.

\textbf{(2)} \underline{$\lozenge_{[\tau_{1},\tau_{2}]}^{w}\mu_{i}$}: In this case, the structure of the autoencoder is the same as the case in  \underline{$\square_{[\tau_{1},\tau_{2}]}^{w}\varphi$}, but the activation function is different and defined as 
\begin{equation}
g^{\lozenge}(\rho^2_{i}, W^{2}) = \rho^{w}(x, \lozenge_{[\tau_{1},\tau_{2}]}^{W^{2}}\rho^{2}_{i}, t)
\end{equation}
where $\rho^{w}(x, \lozenge_{[\tau_{1},\tau{2}]}^{W^{2}}\rho^{2}_{i}, t)$ is defined in  (\ref{eqn:weightedrobustness}).

\textbf{(3)} \underline{$\square_{[0, \tau_{0}]}\lozenge_{[\tau_{1},\tau_{2}]}^{w}\mu_{i}$}: In this case, the decoder maps the time interval $[\tau_{1},\tau_{2}]$ to a vector $W^{2} \in \mathbb{R}^{n\times  1}$, and the $j-th$ element can be defined as:
\begin{equation}
W^{2}_{j} = D(\theta_{d},[\tau_{1},\tau_{2}])_{j}
\end{equation}
Then the activation function   can be defined as:
\begin{equation}
g^{\square\lozenge}(\rho^{2}_{i},W^{2})  = \rho^{w}(x,\wedge_{j=0:\tau_{0}}^{\textbf{I}}\lozenge_{[\tau_{1}+j,\tau_{2}+j]}^{W^{2}}\mu_{i},t),
\end{equation}
where $ \rho^{w}(x,\wedge_{j=0:\tau_{0}}^{\textbf{I}}\lozenge_{[\tau_{1}+j,\tau_{2}+j]}^{W_{i}^{2}}\mu_{i},t)$ is recursively defined in (\ref{eqn:weightedrobustness}) and $\textbf{I}\in \mathbb{R}^{\tau_{0}+1}$ is a vector whose entities are 1.

\textbf{(4)} \underline{$\lozenge_{[0,\tau_{0}]}\square_{[\tau_{1},\tau_{2}]}^{w}\mu_{i}$}: In this case, the decoder also maps the time interval to a matrix $W_{i}$ as the case in  \underline{$\square_{[0, \tau_{0}]}\lozenge_{[\tau_{1},\tau_{2}]}^{w}\mu_{i}$}, and the activation function can be defined as:
\begin{equation}
g^{\lozenge\square}(\mu_{i},W_{i})  = \rho^{w}(x,\vee_{j=0:\tau_{0}}^{\textbf{I}}\square_{[\tau_{1}+j,\tau_{2}+j]}^{W_{i}^{j}}\mu_{i},t),
\end{equation}
where $\rho^{w}(x,\vee_{j=0:\tau_{0}}^{\textbf{I}}\square_{[\tau_{1}+j,\tau_{2}+j]}^{W_{i}^{j}}\mu_{i},t)$ is recursively defined in (\ref{eqn:weightedrobustness}) and $\textbf{I}\in \mathbb{R}^{\tau_{0}+1}$ is a vector whose entities are 1.

\textbf{Layer 4} (Formula reduction layer): This layer includes an ``and'' neuron and an ``or'' neuron. The ``and'' neuron performs a logic ``and'' operation of its inputs, and the ``or'' neuron performs a logic ``or'' operation of its inputs, respectively. Denote the output from Layer 3 is a vector $\rho^{3} =[\rho^{3}_{i},\cdots,\rho^{3}_{M}]$, the activation function for the ``and'' neuron is defined as:
\begin{equation*}
g^{\wedge}(\rho^{3}, W^{3}) = \left\lbrace \begin{array}{ll}
\sqrt[M]{\Pi_{i=1,\cdots,M}(1+W^{3}_{i}\rho^{3}_{i})}-1,  \forall i, o_{i}>0\\
\frac{1}{M}\sum_{i=1,\cdots,M}[W^{3}_{i}\rho^{3}_{i}]_{-},\quad \text{ otherwise}
\end{array}\right.
\end{equation*}
where $W^{3}$ is the parameter  for the connections between Layer 3 and Layer 4, and $W^{3}_{i}$ is the $i-th$ element of $W^{3}$. Then the activation function for  ``or'' neuron can be defined accordingly as:
\begin{equation*}
g^{\vee}(\rho^{3}, W^{3}) = \left\lbrace \begin{array}{ll}
\frac{1}{M}\sum_{i=1,\cdots,M}[W^{3}_{i}\rho^{3}_{i}]_{+},  \exists i\in [1, N], \rho^{3}_{i}>0\\
-\sqrt[M]{\Pi_{i=1,\cdots,M}(1-W^{3}_{i}\rho^{3}_{i})}+1,  \text{ otherwise}
\end{array}\right. 
\end{equation*}

\textbf{Layer 5} (Output layer): The output layer has only one ``and''	 neuron and performs a logic ``and'' of the two outputs of layer 4. The activation function of this layer can be defined as:
\begin{equation*}
g^{\wedge}(\rho^{4}, W^{4}) = \left\lbrace \begin{array}{ll}
\sqrt[2]{\Pi_{i=1,2}(1+W^{4}_{i}\rho^{4}_{i})}-1,  \forall i, \rho^{4}_{i}>0\\
\frac{1}{2}\sum_{i=1,2}[W^{4}_{i}\rho^{4}_{i}]_{-},\quad \text{ otherwise}
\end{array}\right.
\end{equation*}
where $W^{4}$ is the parameters for the connections between Layer 4 and Layer 5.
With the definitions of the activation function, we have the following theorem.

\begin{strip} 
\rule{\textwidth}{0.4pt}
\begin{align}
\begin{array}{lll}
\rho^{w}(x, \wedge_{i=1:N}^{w}\varphi_{i}, t)&: = \left\lbrace  
\begin{array}{ll}
\sqrt[N]{\Pi_{i=1,\cdots,N}(1+w_{i}\rho^{w}(x,\varphi_{i},t)}-1,\quad \forall i\in [1,\cdot, N], \rho^{w}(x,\varphi_{i},t)>0\\
\frac{1}{N}\sum_{i=1,\cdots,N}[w_{i}\rho^{w}(x,\varphi_i,t)]_{-},\quad \text{ otherwise}
\end{array}\right. \\
\rho^{w}(x, \vee_{i=1:N}^{w}\varphi_{i},t)&:= \left\lbrace \begin{array}{ll}
\frac{1}{N}\sum_{i=1,\cdots,N}[w_{i}\rho^{w}(x,\varphi_i,t)]_{+},\quad \exists i\in [1,  N], \rho^{w}(x,\varphi_{i},t)>0\\
-\sqrt[N]{\Pi_{i=1,\cdots,N}(1-w_{i}\rho^{w}(x,\varphi_{i},t)}+1,\quad \text{ otherwise}
\end{array}\right. \\
\rho^{w}(x, \square_{[\tau_{1},\tau_{2}]}^{w}\varphi_{i}, t)&:= \left\lbrace \begin{array}{ll}
\sqrt[\tau_{2}-\tau_{1}]{\Pi_{\tau_{1}}^{\tau_{2}}(1+w_{i}\rho^{w}(x,\varphi_{i},t)}-1,\quad \forall \tau\in [t+\tau_{1},t+\tau_{2}], \rho^{w}(x,\varphi_{i},t)>0\\
\frac{1}{\tau_{2}-\tau_{1}}\sum_{t'\in[\tau_{1}+t,\tau_{2}+t]}[w_{i}\rho^{w}(x,\varphi,t')]_{-},\quad \text{ otherwise}
\end{array}\right. \\
\rho^{w}(x, \lozenge_{[\tau_{1},\tau_{2}]}^{w}\varphi, t)&:=\left\lbrace \begin{array}{ll}
\frac{1}{\tau_{2}-\tau_{1}}\sum_{t'\in [t+\tau_{1},t+\tau_{2}]}[w_{i}\rho^{w}(x,\varphi,t')]_{+},\quad\exists t'\in [t+\tau_{1},t+\tau_{2}], \rho^{w}(x,\varphi_{i},t')>0\\
-\sqrt[\tau_{2}-\tau_{1}]{\Pi_{\tau_{1}}^{\tau_{2}}(1-w_{i}\rho^{w}(x,\varphi,t)}+1, \quad \text{ otherwise}
\end{array}\right. 
\end{array}
\label{eqn:weightedrobustness}
\end{align}

\end{strip}

\begin{theorem}
(Soundness): Denote the embedded wSTL formula of the TLNN to be $\varphi$ and the output of the network  to be $TLNN(\Theta,x)$, where $\Theta$ is the set of parameters for the network, then we have 
\begin{equation}
\begin{array}{ll}
TLNN(\Theta, x)\geq  0 \Leftrightarrow\rho^{w}(x,\varphi,0)\geq 0 \Rightarrow x\models\varphi,\\
TLNN(\Theta, x)< 0 \Leftrightarrow\rho^{w}(x,\varphi,0)<0 \Rightarrow x\nvDash\varphi.\\
\end{array}
\end{equation}
\end{theorem}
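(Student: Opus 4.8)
The plan is to separate the statement into two logically distinct claims. The equivalences $TLNN(\Theta,x)\geq 0\Leftrightarrow\rho^{w}(x,\varphi,0)\geq 0$ and $TLNN(\Theta,x)<0\Leftrightarrow\rho^{w}(x,\varphi,0)<0$ assert that the forward pass of the network reproduces the sign of the weighted robustness degree, while the implications $\rho^{w}(x,\varphi,0)\geq 0\Rightarrow x\models\varphi$ and $\rho^{w}(x,\varphi,0)<0\Rightarrow x\nvDash\varphi$ are exactly the soundness of the wSTL quantitative semantics. I would prove the equivalences by a layer-by-layer identification of the network with the recursive definition~(\ref{eqn:weightedrobustness}), and the implications by structural induction on $\varphi$.

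For the equivalences, I would push a signal through the five layers and match each computation to a clause of the robustness recursion. Layer~2 outputs $f(x(t))-W_{i}^{1}(t)$, which shares its sign with the predicate robustness $\rho^{w}(x,f(x)\geq c,t)=\frac{w}{2}(f(x(t))-c)$ once $c=W_{i}^{1}(t)$ and $w>0$; the missing $\frac{w}{2}$ factor rescales the magnitude but never flips the sign. The Layer~3 activation functions $g^{\square},g^{\lozenge},g^{\square\lozenge},g^{\lozenge\square}$ are defined verbatim as the $\square$, $\lozenge$, and nested combinations appearing in~(\ref{eqn:weightedrobustness}), and the Layer~4 and Layer~5 neurons apply $g^{\wedge}$ and $g^{\vee}$, which coincide term-for-term with the $\wedge$ and $\vee$ clauses, with $W^{3},W^{4}$ in the role of the weights $w_{i}$. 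Since each operator acts by a non-negative weighting followed by a geometric- or arithmetic-mean branch, and each branch is sign-monotone in its arguments, the positive rescaling introduced at the predicate layer cannot change the sign of the output. Composing these identifications from the input layer up to the single output neuron yields that $TLNN(\Theta,x)$ and $\rho^{w}(x,\varphi,0)$ have the same sign, which is precisely the two equivalences.

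For the soundness implications I would induct on the structure of $\varphi$. In the base case, $x\models(f(x)\geq c)$ at time $t$ iff $f(x(t))-c\geq 0$ iff $\rho^{w}(x,\mu,t)\geq 0$, and symmetrically for the strict case. In the inductive step I would read off the sign from each branch of the activation functions. For $\wedge_{i=1:N}^{w}$, if every sub-robustness is positive then each factor $1+w_{i}\rho^{w}(x,\varphi_{i},t)$ is at least $1$, so the geometric-mean branch is non-negative and, by the inductive hypothesis, every conjunct is satisfied; otherwise the arithmetic mean of the negative parts $[w_{i}\rho^{w}(x,\varphi_{i},t)]_{-}$ is non-positive, and it is strictly negative exactly when some conjunct is violated. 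The dual argument gives $\vee$, and the temporal operators $\square$ and $\lozenge$ follow by viewing them as conjunctions and disjunctions over the window $[t+\tau_{1},t+\tau_{2}]$; the compound $\square\lozenge$ and $\lozenge\square$ neurons reduce to the previous cases after unrolling them into the nested $\wedge$ of $\lozenge$ and $\vee$ of $\square$ prescribed by their activation functions. Evaluating at $t=0$ delivers the claim.

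The step I expect to be the main obstacle is the boundary behavior at zero robustness, where the two branches of every activation function meet. I would need to verify that an exact zero is consistently placed on the $\geq 0$ side, so that the ``$\models$'' convention and the network's sign test never disagree, and that no spurious sign change occurs when one sub-robustness vanishes while others stay positive (so the ``$\forall i,\rho>0$'' guard fails yet the mean-of-negative-parts branch still returns $0$). The same care is needed for vanishing weights $w_{i}=0$. A secondary, purely technical point is to confirm that the gap in magnitude between $TLNN(\Theta,x)$ and $\rho^{w}(x,\varphi,0)$ and the soft quantization used only during training leave the sign, and hence the equivalences, untouched.
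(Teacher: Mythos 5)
Your proposal is correct in substance and takes a genuinely different, far more detailed route than the paper's own proof, which is only a two-sentence sketch: the paper argues that since the weights are not applied to the signals directly but multiply the robustness values of predicates and sub-formulas, and the weights are non-negative, the sign of the weighted robustness agrees with that of the traditional STL robustness, so soundness is simply inherited from the known soundness of traditional STL. You instead give a self-contained argument in two parts: a layer-by-layer identification of the forward pass with the recursion in (\ref{eqn:weightedrobustness}) to establish the equivalences, and a structural induction on $\varphi$ to establish the implications. Your route buys rigor and makes explicit exactly where sign-preservation enters (each branch of $g^{\wedge}$, $g^{\vee}$, $g^{\square}$, $g^{\lozenge}$ is sign-consistent under non-negative weighting); the paper's route buys brevity by outsourcing the induction to the cited wSTL/STL soundness results. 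Notably, the obstacles you flag in your last paragraph are genuine and are silently ignored by the paper: with the non-strict ``$\geq$'' in the theorem, a strict predicate $f(x)<c$ evaluated at the boundary $f(x(t))=c$ gives $\rho^{w}=0$ yet is not satisfied, so the implication $\rho^{w}(x,\varphi,0)\geq 0 \Rightarrow x\models\varphi$ can fail at exact zero; similarly a vanishing weight $w_{i}=0$ (which the paper's neuron-reduction step can actually produce by thresholding) masks a violated conjunct, again breaking that implication at zero robustness. These corner cases are defects of the theorem as stated (standard STL soundness results assume strict positivity of the robustness), not of your argument, which is valid whenever the robustness is nonzero and all retained weights are strictly positive.
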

\begin{proof}
(sketch) Note that the weights are  not directly applied to the signals, but combined with the robustness of predicates or atomic formulas, thus the weights soundness property of traditional STL. Thus the soundness property is preserved.
\end{proof}
\section{TLNN Learning}
\label{sec:learning}
This section introduces the learning method of TLNN. Initially, there exists  only one neuron in layer 3. More neurons can be added based on simultaneous structure and parameter learning. The python code of TLNN can be found at \url{https://github.com/datagangchen/TLNN}.
\subsection{Structure Learning}

The online structure learning is based on the importances of the neurons. The output of layer 5 and weights of layer 4 are used to determine whether a new neuron should be generated. The structure-learning algorithm for the TLNN can be described in two steps: neuron reduction and  increase.

\textbf{Neuron reduction}: Neuron reduction is performed in layer 4, where we define a threshold $w_{th}$ for the weights between layer 3 and layer 4. Denote the weight vector is $W^{3}$ and $W^{3,j}_{i}$ denotes the weight between the $i-th$ neuron in layer 3 to the  $j-th$ neuron in layer 4, where $i\in 1,\cdots, M$ and $j = 1,2$. Then the neuron reduction step can be described as follows:
\begin{equation}
W^{3,j}_{i} = \left\lbrace 
\begin{array}{ll}
0, \text{ if } W^{3,j}_{i}<w_{th}\\
W^{3,j}_{i},\text{ if } W^{3,j}_{i}\geq w_{th}
\end{array}\right. 
\end{equation}

\textbf{Neuron increase}: It is possible that the neural network cannot approach a good result with the current neurons. In this case, we need to add more neurons to the network to increase its expressiveness. Here we define a threshold  for the cost function defined over the output result of layer 5. The cost function is defined as:
\begin{equation}
C = \frac{1}{|\mathcal{D}|}\sum_{(x,y, y_{d})\in \mathcal{D}}(y-y_{d})^{2}
\end{equation}
where $\mathcal{D}$ represents the data set, which has the form $(x, y, y_{d})$. $x$ denotes the signal, $y$ denotes the output from the TLNN at layer 5, and $y_{d}$ denotes the desired output, respectively. Given a threshold $c_{th}$, when $C>c_{th}$, a new neuron will be added to layer 3. The type of which is randomly chosen from four kinds of atomic formula. 

\subsection{Parameter Learning}
The parameter learning step is performed simultaneously to the structure-learning step. The gradient descent algorithm is used to train the parameters of the network at each time when an incoming sample is given, which is suitable for a supervised method. The aim of the learning process is to minimize the error function
\begin{equation}
L=\frac{1}{2}(y-y_{d})^{2}
\label{eqn:loss}
\end{equation}
Next, we describe the antecedent parameter learning for TLNN based on the gradient descent algorithm as follows.
Let $\theta$ denote the antecedent parameter that need to be trained, then $\theta$ should be updated at $k-th$ step as 
\begin{equation}
\theta (k+1) = \theta (k)-\eta\frac{\partial L}{\partial \theta (k)}
\end{equation}
where $\eta$ is a learning coefficient in the range $[0,1]$ and $\theta$ can be any parameter of the network. Next, we will derive the gradient $\frac{\partial L}{\partial \theta}$ step by step.

\textbf{Gradient for output layer}: There are two weight parameters specifying the weights    for the output of ``and'' neuron and ``or'' neuron in Layer 4, denoted as $W^{4}_{1}$ and $W^{4}_{2}$, respectively. The gradient of the parameters are defined as follows.
\begin{equation}
\frac{\partial L}{\partial W^{4}_{i}} =\frac{\partial L}{\partial \rho^5} \frac{\partial \rho^{w}(x,\wedge^{W^{4}}_{i=1,2}\rho^{4}_{i},t)}{\partial W^{4}_{i}}
\end{equation}
 where $\rho^{5}$ is the output of the network, $\rho^{4}_i$ is the output of the previous layer to neuron $i$ and the partial differentiation $\frac{\partial \rho^{w}(x,\wedge^{W^{4}}_{i=1,2}\rho^{4}_{i},t)}{\partial W^{4}_{i}}$ can be obtained with the robustness equation in (\ref{eqn:weightedrobustness}). The loss function is defined in (\ref{eqn:loss}), thus $\frac{\partial L}{\partial \rho^5} = (y-y_{d})$.

\textbf{Gradient for formula reduction layer}: There are two kinds of neurons in this layer, thus the gradient of the two neurons are different. Denote $W^{3,a}_{i}$ is the parameter of the $i-th$ input weight for ``and'' neuron, and $W^{3,o}_{i}$ is defined accordingly for the ``or'' neuron. The gradients for the parameters  of ``and'' neuron are
\begin{equation}
\frac{\partial L}{\partial W^{3,a}_{i}} =\frac{\partial L}{\partial \rho^5} \frac{\partial \rho^{5}}{\partial \rho_{a}^{4}}\frac{\partial \rho_{a}^{4}}{\partial W^{3,a}_{i}}
\end{equation}
where $\rho_{a}^{4}$ is the output of the ``and''  neuron in layer 4, and $\frac{\partial \rho^{5}}{\partial \rho^{4}_{a}}$ is the differentiation of the output in layer 5 with respect to the output of the ``and'' neuron in layer 4. The differentiation $\frac{\partial \rho_{a}^{4}}{\partial W^{3,a}_{i}}$ can be obtained by $\frac{\partial \rho^{w}(x,\wedge^{W^{3}}_{i=1:M}\rho^{3}_{i},t)}{\partial W^{3,a}_{i}}$ derived with the robustness defined in (\ref{eqn:weightedrobustness}).  Then the    gradients for the parameters  of ``or'' neuron is
\begin{equation}
\frac{\partial L}{\partial W^{3,o}_{i}} =\frac{\partial L}{\partial \rho^5} \frac{\partial \rho^{5}}{\partial \rho_{o}^{4}}\frac{\partial \rho_{o}^{4}}{\partial W^{3,o}_{i}}
\end{equation}
where $\frac{\partial \rho_{a}^{4}}{\partial W^{3,o}_{i}}$ can be defined by $\frac{\partial \rho^{w}(x,\vee^{W^{3}}_{i=1:M}\rho^{3}_{i},t)}{\partial W^{3,o}_{i}}$  accordingly.

\textbf{Gradient for atomic formula layer}: In this layer, there are four kinds of neurons, and each of the gradient has a different form. In this layer, the outputs of layer 2 are directly passed to layer 3 without assigning weights, thus the parameters that need to be updated are the parameters of the autoencoder.  We denote the $i-th$ parameter of $j-th$ neuron   as $\theta^{j}_{i}$, where $j=1,\cdots, M$ and $l=1,\cdots,4$ and the number of $i$ depends on the structure of the autoencoder. Then the gradient for $\theta^{j}_{i}$ can be defined as 
\begin{equation}
\frac{\partial L}{\partial \theta^{j}_{i}}=\frac{\partial L}{\partial \rho^{5}}(\frac{\partial  \rho^{5}}{\partial \rho^{4}_{a}}\frac{\partial \rho^{4}_{a}}{\partial \rho^{3}_{j}}+\frac{\partial \rho^{5}}{\partial \rho^{4}_{o}}\frac{\partial \rho^{4}_{o}}{\partial \rho^{3}_{j}})\frac{\partial \rho^{3}_{j}}{\partial \theta^{j}_{i}}
\end{equation}
where $\rho^{3}_{j}$ is the output of the $j-th$ neuron, $\frac{\partial L}{\partial \rho^{5}}$, $\frac{\partial  \rho^{5}}{\partial \rho^{4}_{a}}$, $\frac{\partial \rho^{4}_{a}}{\partial \rho^{3}_{j}}$, $\frac{\partial \rho^{5}}{\partial \rho^{4}_{o}}$, and $\frac{\partial \rho^{4}_{o}}{\partial \rho^{3}_{j}}$ can be obtained based on the robustness definition in (\ref{eqn:weightedrobustness}).  $\frac{\partial \rho^{3}_{j}}{\partial \theta^{j}_{i}}$ is the standard differentiation for autoencoder, which can be obtained easily by follow the method described in \cite{nguyen2019dynamics}. Note that $\rho^{3}_{j}$ has different definitions for different type of atomic formulas.

\textbf{Gradient for predicate layer}: There are $M\times n$ neurons in this layer, but there are only $M$ weight parameters given, since each neuron in layer 3 has only one predicate in the form $f(x)-W^{1}_j\geq 0$ or $f(x)-W^{1}_j<0$, and $W^{1}_j$ is the parameter that needs to be updated.  Then the gradient for $W^{1}_j$ can be defined as 
\begin{equation*}
\frac{\partial L}{\partial W^{1}_j}=\frac{1}{n}\frac{\partial L}{\partial \rho^{5}}\sum_{i=1:n}
(\frac{\partial \rho^{5}}{\partial \rho^{4}_{a}}\frac{\partial \rho^{4}_{a}}{\partial \rho^{3}_{j}}+\frac{\partial \rho^{5}}{\partial \rho^{4}_{o}}\frac{\partial \rho^{4}_{o}}{\partial \rho^{3}_{j}})\frac{\partial \rho^{3}_{j}}{\partial \rho^{2,j}_{i}}\frac{\partial \rho^{2,j}_{i}}{\partial W^{1}_{j}}
\end{equation*}
where $\rho^{2,j}_{i}$ is the output to $j-th$ neuron of Layer 2 in the $i$ input group, which is computed with the $i$ neuron of Layer 1.  $\frac{\partial \rho^{3}_{j}}{\partial \rho^{2,j}_{i}}$ can be defined according to the type of neuron in Layer 3, which can follow the gradient in the atomic formula layer. $\frac{\partial \rho^{2,j}_{i}}{\partial W^{1}_{j}}$ is $1$ or $-1$ one according to the type of predicates.
\begin{figure}[t]
\centering
\includegraphics[scale=0.28]{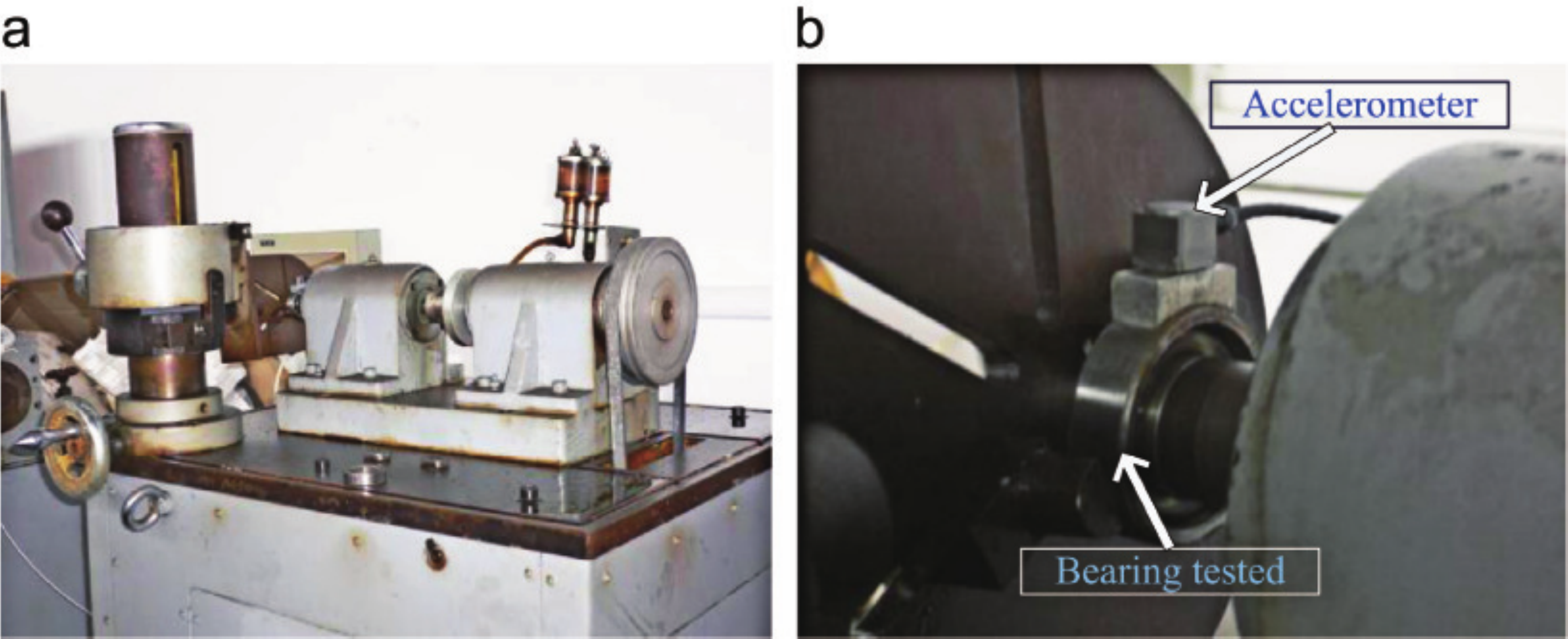}
\caption{(a) The   test rig and (b) the location of the accelerometer for signal collection. }
\label{fig: testrig}
\end{figure}
\section{Case Studies}
\label{sub:case}

In this section,  we will evaluate the performance of our TLNN for fault diagnosis and interpretation with experimental data over a set of rolling element bearings.   The rotational machine under study is a rolling-element bearing test-rig,  as shown in Figure \ref{fig: testrig}, which has been used in \cite{jiang2015study}. To collect the faulty signals of rolling element bearings, the electrical-discharge machining method was used to introduce single pitting faults. The faults are added to the surface of the race or the rolling body of a series of rolling element bearings (one type of fault for each). The signals are collected with the shaft speed being fixed and the sampling rate is 12 kHz.

During the experiment,  we introduce three kinds of faults to the bearings, i.e., rolling element fault, inner race fault, and outer race fault. Then we collect data for four conditions of the bearings, i.e., rolling element fault, inner race fault, outer race fault, and normal bearing. After all the data has been collected, 220  signal samples with length 1024 for each condition were used for demonstration (880 pieces in all). To obtain the input signals for TLNN, we first decompose the signals with wavelet package transform at level two, then we calculate the second temporal moment with the Matlab embedded function for each piece of the signals. With the second temporal moment for each decomposed signal, we connect the decomposed signals into one and sample the obtained signals to get a shorter signal, which has a length of 128. Therefore, we have 220  samples for each bearing condition.   Then we construct the labelled training and testing set. The positive training set for inner fault includes 110 samples from inner fault signals. The labels of which are $1$, and the negative set includes 90 samples from the other three conditions (30 samples for each). The labels of which are $-1$. Then the training set for the other two conditions is constructed accordingly. The positive testing set for inner fault includes the rest of 110 samples from inner fault signals, and the negative testing set includes 90 samples from the other three conditions (30 samples for each). We construct the training and testing sets for each fault independently and accordingly.

\begin{figure}[hbtp]
\centering
\includegraphics[scale=0.64]{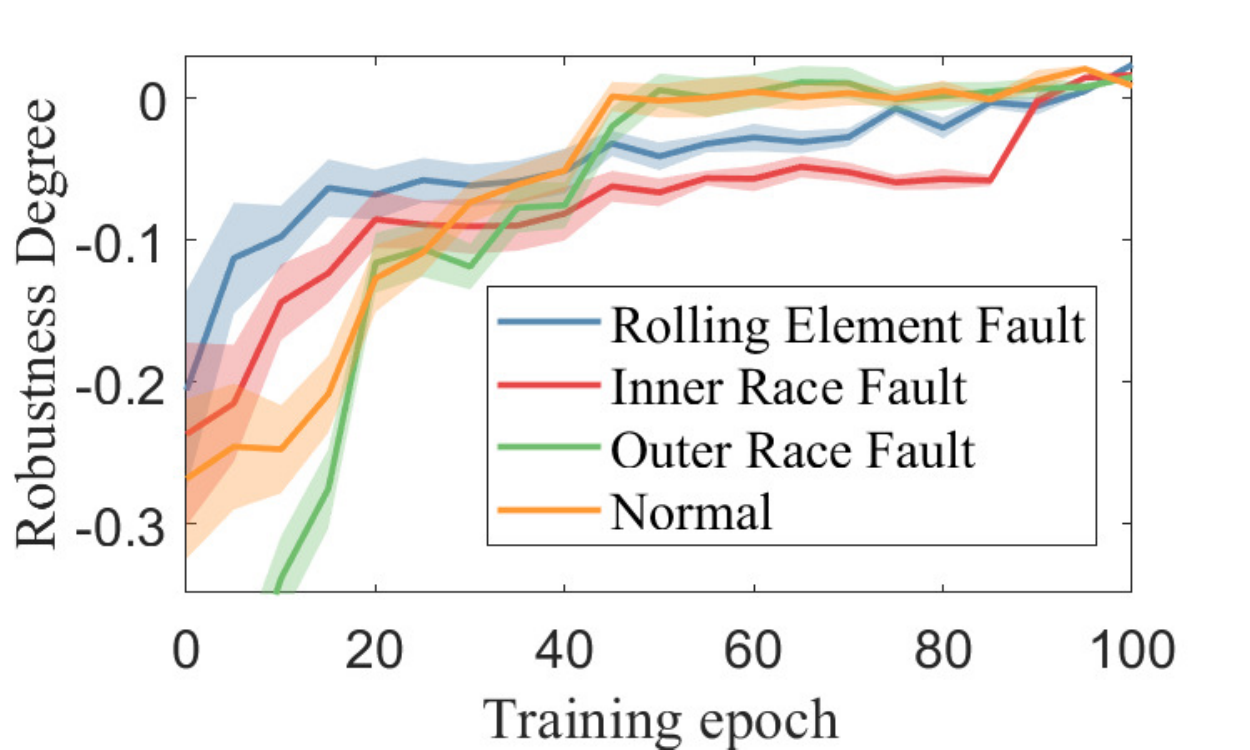}
\caption{Average robustness degree and its variance obtained for each training epoch among testing data sets with the TLNN.}
\label{fig:misclass}
\end{figure}
%
% \begin{table*}[t]
%\centering
%\caption{ sTFPG associated STL formulas   for inner race, outer race and rolling element faults with real experiment data.}
% %  \captionof{table}{Experiment Results}
% \resizebox{\textwidth}{!}{% <------ Don't forget this %
%    \begin{tabular}{ll} % <-- Alignments: 1st column left, 2nd middle and 3rd right, with vertical lines in between
%    \hline \hline\\[-2.5mm]
% \textbf{Fault Type} &{Interpretation  } \\[0.5ex]
%      \hline
%      Inner Race&  $\varphi_{I} = \lozenge_{[0,5]}\square_{[58,66]}(x\geq 0.05)\wedge \square_{[14,31]}(x<0.3)\wedge \square_{[45,52]}(x<0.04)$;  \\
%      \hline
%      Outer Race &$\varphi_{O}=\lozenge_{[60,68]}(x<0.1)\wedge \lozenge_{[44,50]}(x\geq 0.08)\wedge \lozenge_{[0,5]}\square_{[18,30]}(x< 0.3) $; \\
%      \hline
%      Rolling Element& $\varphi_{R}=\lozenge_{[0,5]}\square_{[20,25]}(x<0.1)\wedge \square_{[65,72]}(x\geq 0.3)$;  \\
%      \hline
%      Normal& $\varphi_{N}= \lozenge_{[0, 5]}\square_{[15,25]}(x\geq 0.4)\vee \square_{[0, 5]}(x\geq 0.12)$. \\      
%      \hline \hline
%    \end{tabular}
%}
%      \label{tab:table4}
%\end{table*}

 \begin{table*}[t]
\centering
\caption{ sTFPG associated STL formulas   for inner race, outer race and rolling element faults with real experiment data.}
 %  \captionof{table}{Experiment Results}
 \resizebox{\textwidth}{!}{% <------ Don't forget this %
    \begin{tabular}{llllll} % <-- Alignments: 1st column left, 2nd middle and 3rd right, with vertical lines in between
    \hline \hline\\[-2.5mm]
 \textbf{Fault Type} &{Interpretation}&\multicolumn{2}{c}{\textbf{Robustness Degree}} &\multicolumn{2}{c}{\textbf{Error Rate}} \\[0.5ex]
      \hline
      -& -& Training & Testing&  Training & Testing\\
      Inner Race &  $\varphi_{I} = \lozenge_{[0,5]}\square_{[58,66]}(x\geq 0.05)\wedge \square_{[14,31]}(x<0.3)\wedge \square_{[45,52]}(x<0.04)$ &0.0012 &0.0082& 0.000&0.000   \\
      \hline
      Outer Race &$\varphi_{O}=\lozenge_{[60,68]}(x<0.1)\wedge \lozenge_{[44,50]}(x\geq 0.08)\wedge \lozenge_{[0,5]}\square_{[18,30]}(x< 0.3) $& 0.0022 & -0.030&  0.000  &0.045 \\
      \hline
      Rolling Element& $\varphi_{R}=\lozenge_{[0,5]}\square_{[20,25]}(x<0.1)\wedge \square_{[65,72]}(x\geq 0.3)$ & 0.0022 & -0.030&  0.000  &0.045  \\
      \hline
      Normal& $\varphi_{N}= \lozenge_{[0, 5]}\square_{[15,25]}(x\geq 0.4)\vee \square_{[0, 5]}(x\geq 0.12)$ &0.043 &0.011& 0.000 &0.000 \\      
      \hline \hline
    \end{tabular}
}
      \label{tab:table4}
\end{table*}

%
%\begin{table}[!th]
%\centering
% \caption{Fault diagnosis results with sTFPGs  for real bearing signals.}
% %  \captionof{table}{Experiment Results}
%  %\resizebox{\textwidth}{!}{% <------ Don't forget this %
%       \begin{tabular}{lllll} % <-- Alignments: 1st column left, 2nd middle and 3rd right, with vertical lines in between      
%    \hline\hline\\[-2.5mm]
% \textbf{Fault Type} & \multicolumn{2}{c}{\textbf{Robustness Degree}} &\multicolumn{2}{c}{\textbf{Error Rate}}\\[0.5ex]
% \hline
%  - & Training & Testing&  Training & Testing \\
%      \hline     
%      Inner Race Fault& 0.0012 &0.0082& 0.000&0.000 \\
%      Outer Race Fault & 0.0022 & -0.030&  0.000  &0.045 \\
%      Rolling Element Fault&0.0043 &-0.0021& 0.000 &0.015  \\
%      Normal&0.043 &0.011& 0.000 &0.000  \\      
%      \hline\hline
%    \end{tabular}
%     \label{tab:table3}
%    % }
%\end{table}

Table \ref{tab:table4} shows  the learning results of the TLNN for the four conditions, where the weights are omitted since they do not help interpret the faults.  The formulas can be seen as the interpretation of the faults. For example, for the inner race fault, the formula $\varphi_{I}$ can be read ``eventually within $[0,5]$, always within $[58,66]$, the signals should be larger  than or equal to 0.05, and always within $[14,31]$, the signals should be smaller than 0.3, and always within $[45,52]$, the signals should be smaller than 0.04.''  The other faults can be interpreted in the same way. The visualization of these formulas can be found in Fig. \ref{fig:visuall_inner}-\ref{fig:visuall_normal}, where the blue signals are positively labelled, and red signals are negatively labelled, and the blue signals should avoid the yellow regions but reach the green regions. Note that the trajectories are not time series, but spectral of the signals. Here  we can consider the frequency as time and use index values to map the signals to   values, such that the TLNN can be applied. Moreover, Fig. \ref{fig:visuall_inner}-\ref{fig:visuall_normal} only show a portion of the data set for better visuals.

\begin{figure}[hbtp]
\centering
\includegraphics[scale=0.65]{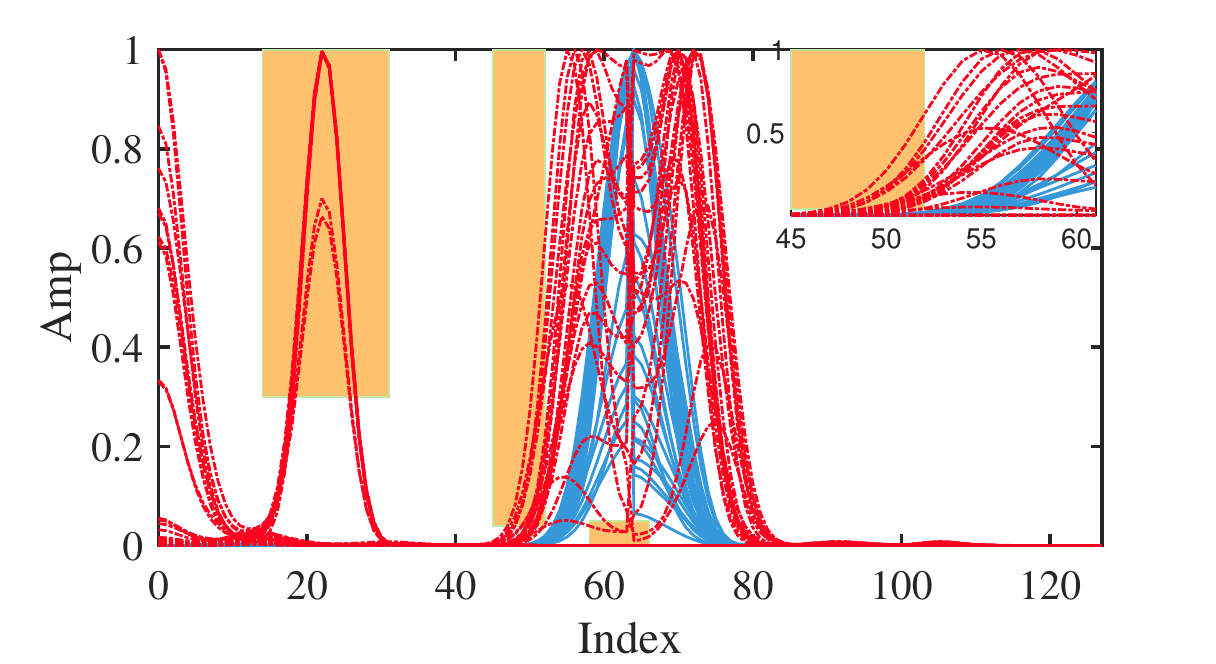}
\caption{Formal interpretation visualization for $\varphi_{I}$.}
\label{fig:visuall_inner}
\end{figure}

\begin{figure}[hbtp]
\centering
\includegraphics[scale=0.65]{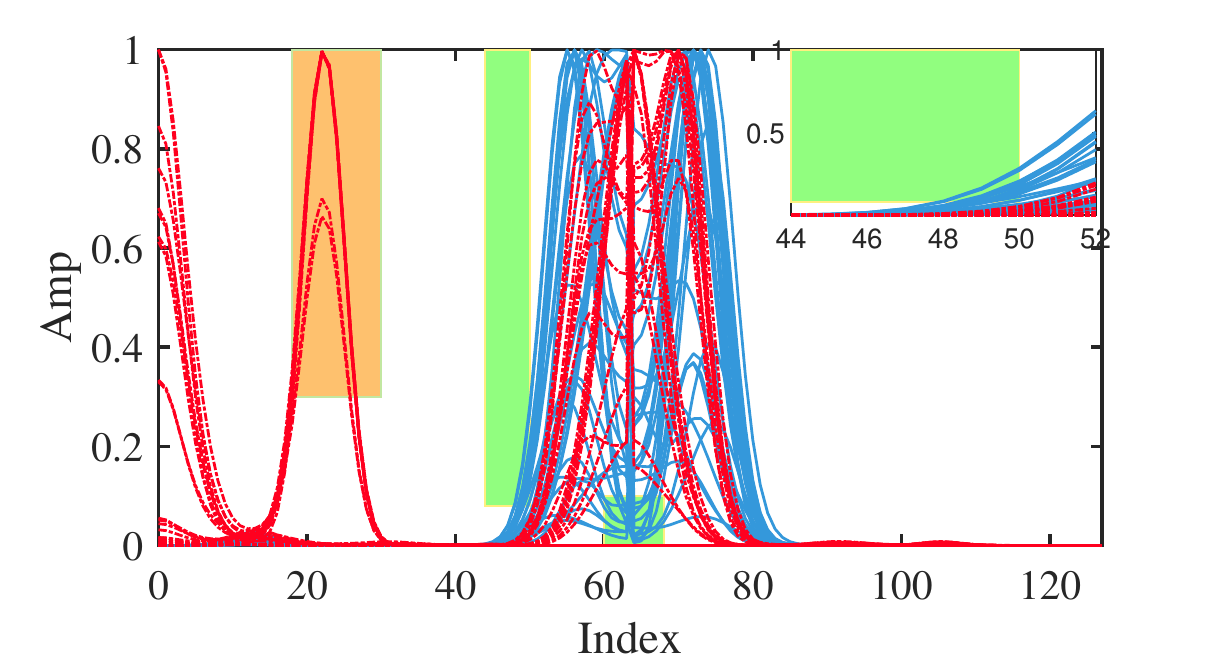}
\caption{Formal interpretation visualization for $\varphi_{O}$.}
\label{fig:visuall_outer}
\end{figure}

\begin{figure}[hbtp]
\centering
\includegraphics[scale=0.65]{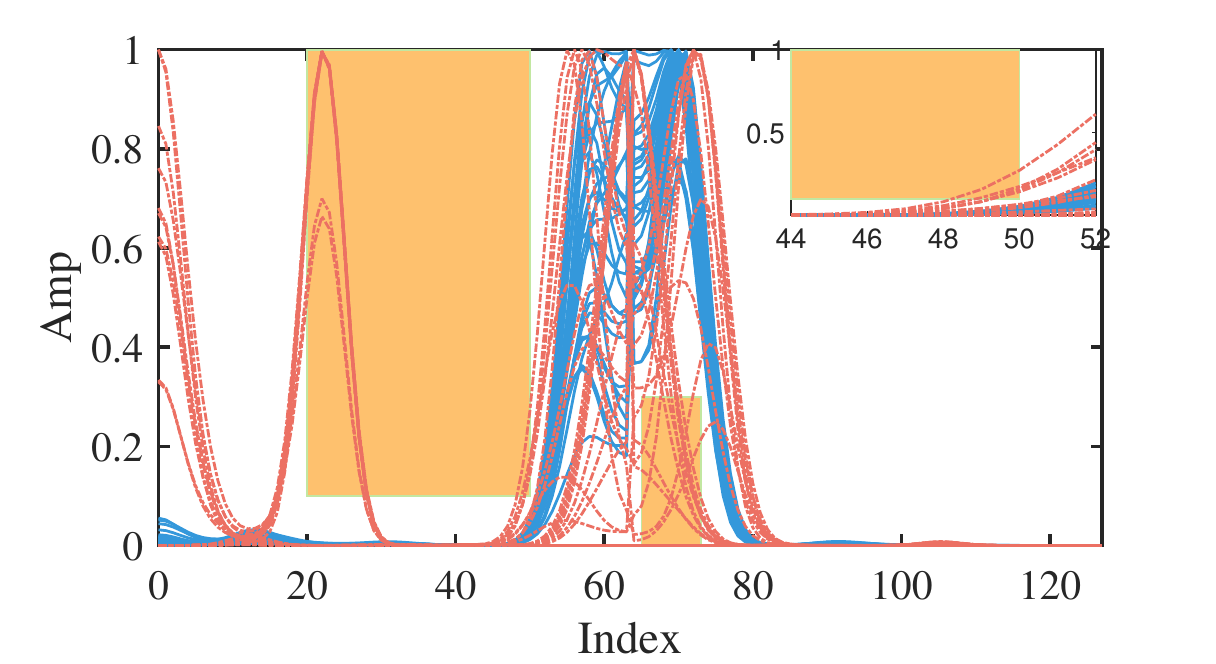}
\caption{Formal interpretation visualization for $\varphi_{R}$.}
\label{fig:visuall_ball}
\end{figure}

\begin{figure}[hbtp]
\centering
\includegraphics[scale=0.65]{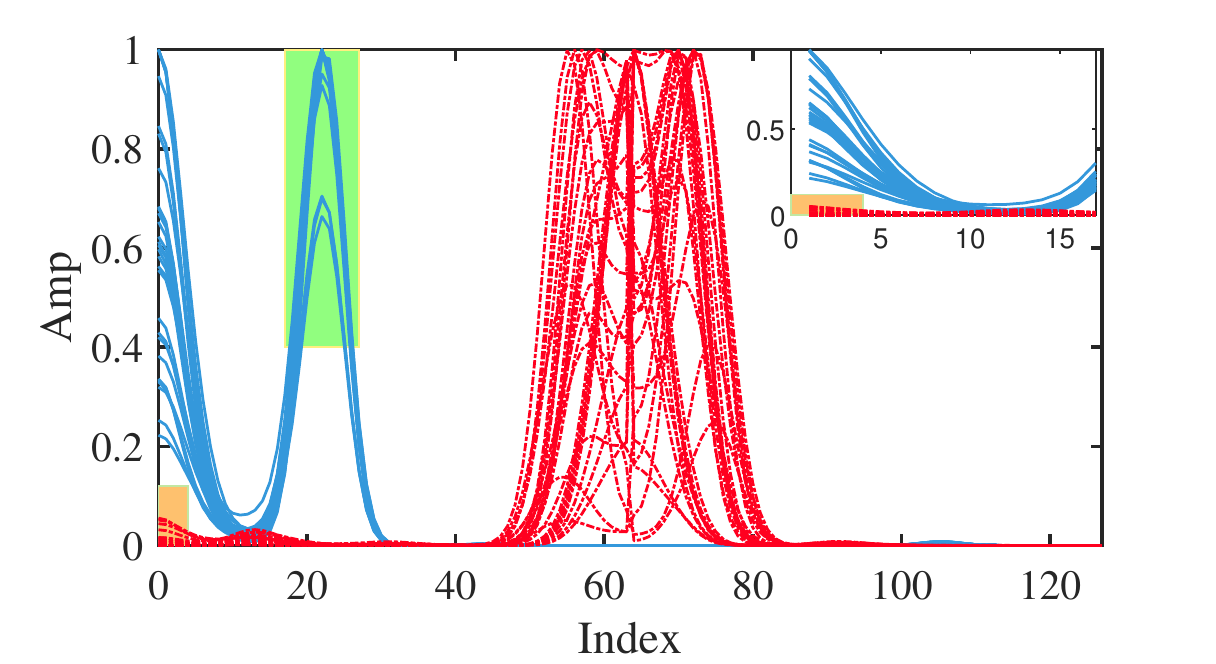}
\caption{Formal interpretation visualization for $\varphi_{N}$.}
\label{fig:visuall_normal}
\end{figure}
The learned wSTL formulas have some components denoting the signals that are larger than some values within some time intervals, and showing that the occurring of fault conditions will lead to a larger value for the energy at the time within these intervals.  Moreover,  different faults will have different concentration patterns in the frequency domain. Namely, the wSTL formulas denote  a sequence of impulse energies. Since the fault mechanism of rolling-element bearing is that the impulse energy comes from the strikes of rollers on the fault surface and excites the striking response of the bearing system, which means our wSTL formula reveals the same knowledge of the fault mechanism. With this knowledge, human maintainers know that avoiding energy concentration by adding lubrication will reduce the risk of system failure.

Table \ref{tab:table4} also shows the average faults diagnosis results for the experimental data with the learned wSTL formulas among 10 trails, in which the robustness and error rate are used as the metrics. Note that a formula with a positive robustness indicates it diagnoses the fault correctly.  On the contrary, a formula with negative robustness may lead to misdiagnosis. For example, the outer race has a negative robustness among the testing data sets, therefore leading to the misdiagnosis in Table \ref{tab:table4}, which may be caused by the noise or the differences of fault patterns or distribution between training and testing data.   Our results not only show that our method can find the wSTL formulas that are in line with the failure mechanisms of rolling element bearing, but also achieve a fault diagnosis error that is less than 5\% with the test data set.  Moreover, the increase of robustness may not lead to a decrease in error rate, since we can only guarantee that positive robustness leads to a zero error rate.

In the second experiment, we further investigate the properties of our method by conducting a comparison experiment with the state-of-the-art formal logic-based methods, in which we compare the performance between our method and the temporal logic-based method in \cite{kong2016temporal} and frequency temporal logic method (FTL) in \cite{chen2020frequency}  over the rolling element fault case. In this experiment,     we check the performance of these methods at 20, 40, 60, and 80 minutes during the training process.   The results are shown in Table \ref{tab:table5}, which shows that our method and the method in \cite{kong2016temporal} can achieve zero miss-classification rate within 40 minutes, while the method in \cite{chen2020frequency} cannot fully diagnose the faults correctly. \cite{kong2016temporal} can obtain a good performance since it searches along with a predefined order for the optimal formulas. When the length of the formula is small, it can obtain a good performance. The method in \cite{chen2020frequency} tries to use a Gaussian Process to approximate the robustness degree function, which is a hard task due to the non-convex  and non-smooth properties of the robustness degree function. Therefore, \cite{chen2020frequency} cannot reach a good performance within a limited time. Moreover, the robustness degree obtained with the proposed method is usually larger than the other two, since the TLNN tried to approximate the target value, which is 1, while the other methods tried to find a positive robustness. In the other word, the robustness obtained in this paper does not reveal how much the signals satisfy the wSTL formula.   However, we can ignore the weights in wSTL, which leads to an STL formula, and calculate the robustness for the obtained STL formula to find how much the signals satisfy the formula.

\begin{table}[!ht]
\begin{center}
   \caption{Comparison results between the proposed method and the other logic based methods for rolling element fault}\label{tab:table5}
   \resizebox{\columnwidth}{!}{
    \begin{tabular}{lcccc} % <-- Alignments: 1st column left, 2nd middle and 3rd right, with vertical lines in between
    \hline \hline\\[-3mm]
   \textbf{Method} &  \multicolumn{4}{c}{Error Rate/ Robustness}\\
     \hline 
Time (min)    & 20  & 40 & 60 & 80\\
      \hline
      Proposed Method                         &0.05/-0.0034    &\textbf{0.00/0.243} &\textbf{0.00/0.412}& \textbf{0.00/0.552}\\
      FTL \cite{chen2020frequency} & \textbf{0.31/-0.312} &0.135/-0.213  &0.050/-0.0016 &0.015/-0.0013\\    
      Temporal Logic \cite{kong2016temporal}  &0.320/-0.343     &0.250/-0.115   &0.010/-0.012 & 0.00/0.023\\
      [.1ex]\hline\hline
    \end{tabular}
  }
\end{center}
\end{table}

\section{Conclusions}
\label{sub:conclusion}
The paper presents a novel neural network, which is called a temporal logic neural network. The proposed neural network keeps the nice properties of traditional neural networks, but can be interpreted with a weighted signal temporal logic.    The advantages of the proposed method have been demonstrated with experiments with real data sets. The experiment results indicate the proposed neural network can achieve high performance for fault diagnosis in terms of accuracy and computation efficiency. Moreover, the embedded formal language of the neural network can provide an explanation of the decision process.  Given the popularity of formal language in safety-critical systems, we believe this paper provides a necessary foundation for many future system control monitoring frameworks.

\bibliography{references}

% Generated by IEEEtran.bst, version: 1.14 (2015/08/26)
\begin{thebibliography}{10}
\providecommand{\url}[1]{#1}
\csname url@samestyle\endcsname
\providecommand{\newblock}{\relax}
\providecommand{\bibinfo}[2]{#2}
\providecommand{\BIBentrySTDinterwordspacing}{\spaceskip=0pt\relax}
\providecommand{\BIBentryALTinterwordstretchfactor}{4}
\providecommand{\BIBentryALTinterwordspacing}{\spaceskip=\fontdimen2\font plus
\BIBentryALTinterwordstretchfactor\fontdimen3\font minus
  \fontdimen4\font\relax}
\providecommand{\BIBforeignlanguage}[2]{{%
\expandafter\ifx\csname l@#1\endcsname\relax
\typeout{** WARNING: IEEEtran.bst: No hyphenation pattern has been}%
\typeout{** loaded for the language `#1'. Using the pattern for}%
\typeout{** the default language instead.}%
\else
\language=\csname l@#1\endcsname
\fi
#2}}
\providecommand{\BIBdecl}{\relax}
\BIBdecl

\bibitem{haidong2018intelligent}
S.~Haidong, J.~Hongkai, L.~Xingqiu, and W.~Shuaipeng, ``Intelligent fault
  diagnosis of rolling bearing using deep wavelet auto-encoder with extreme
  learning machine,'' \emph{Knowledge-Based Systems}, vol. 140, pp. 1--14,
  2018.

\bibitem{yang2020interpreting}
Z.-b. Yang, J.-p. Zhang, Z.-b. Zhao, Z.~Zhai, and X.-f. Chen, ``Interpreting
  network knowledge with attention mechanism for bearing fault diagnosis,''
  \emph{Applied Soft Computing}, vol.~97, p. 106829, 2020.

\bibitem{goodfellow2016deep}
I.~Goodfellow, Y.~Bengio, A.~Courville, and Y.~Bengio, \emph{Deep
  learning}.\hskip 1em plus 0.5em minus 0.4em\relax MIT press Cambridge, 2016,
  vol.~1, no.~2.

\bibitem{he2017deep}
M.~He and D.~He, ``Deep learning based approach for bearing fault diagnosis,''
  \emph{IEEE Transactions on Industry Applications}, vol.~53, no.~3, pp.
  3057--3065, 2017.

\bibitem{shao2017electric}
H.~Shao, H.~Jiang, H.~Zhang, and T.~Liang, ``Electric locomotive bearing fault
  diagnosis using a novel convolutional deep belief network,'' \emph{IEEE
  Transactions on Industrial Electronics}, vol.~65, no.~3, pp. 2727--2736,
  2017.

\bibitem{zhao2019deep}
D.~Zhao, T.~Wang, and F.~Chu, ``Deep convolutional neural network based planet
  bearing fault classification,'' \emph{Computers in Industry}, vol. 107, pp.
  59--66, 2019.

\bibitem{hao2020multisensor}
S.~Hao, F.-X. Ge, Y.~Li, and J.~Jiang, ``Multisensor bearing fault diagnosis
  based on one-dimensional convolutional long short-term memory networks,''
  \emph{Measurement}, vol. 159, p. 107802, 2020.

\bibitem{chen2017multisensor}
Z.~Chen and W.~Li, ``Multisensor feature fusion for bearing fault diagnosis
  using sparse autoencoder and deep belief network,'' \emph{IEEE Transactions
  on Instrumentation and Measurement}, vol.~66, no.~7, pp. 1693--1702, 2017.

\bibitem{zhao2021interpretable}
B.~Zhao, C.~Cheng, G.~Tu, Z.~Peng, Q.~He, and G.~Meng, ``An interpretable
  denoising layer for neural networks based on reproducing kernel hilbert space
  and its application in machine fault diagnosis,'' \emph{Chinese Journal of
  Mechanical Engineering}, vol.~34, no.~1, pp. 1--11, 2021.

\bibitem{abid2019robust}
F.~B. Abid, M.~Sallem, and A.~Braham, ``Robust interpretable deep learning for
  intelligent fault diagnosis of induction motors,'' \emph{IEEE Transactions on
  Instrumentation and Measurement}, vol.~69, no.~6, pp. 3506--3515, 2019.

\bibitem{ragab2018fault}
A.~Ragab, M.~El-Koujok, B.~Poulin, M.~Amazouz, and S.~Yacout, ``Fault diagnosis
  in industrial chemical processes using interpretable patterns based on
  logical analysis of data,'' \emph{Expert Systems with Applications}, vol.~95,
  pp. 368--383, 2018.

\bibitem{grezmak2019interpretable}
J.~Grezmak, J.~Zhang, P.~Wang, K.~A. Loparo, and R.~X. Gao, ``Interpretable
  convolutional neural network through layer-wise relevance propagation for
  machine fault diagnosis,'' \emph{IEEE Sensors Journal}, vol.~20, no.~6, pp.
  3172--3181, 2019.

\bibitem{chen2020frequency}
G.~Chen, M.~Liu, and J.~Chen, ``Frequency-temporal-logic-based bearing fault
  diagnosis and fault interpretation using bayesian optimization with bayesian
  neural networks,'' \emph{Mechanical Systems and Signal Processing}, vol. 145,
  p. 106951, 2020.

\bibitem{chen2020formal}
G.~Chen, P.~Wei, H.~Jiang, and M.~Liu, ``Formal language generation for fault
  diagnosis with spectral logic via adversarial training,'' \emph{IEEE
  Transactions on Industrial Informatics}, DOI: 10.1109/TII.2020.3040743, 2020.

\bibitem{deshmukh2017robust}
J.~V. Deshmukh, A.~Donz{\'e}, S.~Ghosh, X.~Jin, G.~Juniwal, and S.~A. Seshia,
  ``Robust online monitoring of signal temporal logic,'' \emph{Formal Methods
  in System Design}, vol.~51, no.~1, pp. 5--30, 2017.

\bibitem{dokhanchi2014line}
A.~Dokhanchi, B.~Hoxha, and G.~Fainekos, ``On-line monitoring for temporal
  logic robustness,'' in \emph{International Conference on Runtime
  Verification}.\hskip 1em plus 0.5em minus 0.4em\relax Springer, 2014, pp.
  231--246.

\bibitem{chen2020temporal}
G.~Chen, M.~Liu, and Z.~Kong, ``Temporal-logic-based semantic fault diagnosis
  with time-series data from industrial internet of things,'' \emph{IEEE
  Transactions on Industrial Electronics}, vol.~68, no.~5, pp. 4393--4403,
  2020.

\bibitem{mehdipour2020specifying}
N.~Mehdipour, C.-I. Vasile, and C.~Belta, ``Specifying user preferences using
  weighted signal temporal logic,'' \emph{IEEE Control Systems Letters},
  vol.~5, no.~6, pp. 2006--2011, 2020.

\bibitem{varnai2020robustness}
P.~Varnai and D.~V. Dimarogonas, ``On robustness metrics for learning stl
  tasks,'' in \emph{2020 American Control Conference (ACC)}.\hskip 1em plus
  0.5em minus 0.4em\relax IEEE, 2020, pp. 5394--5399.

\bibitem{gilpin2020smooth}
Y.~Gilpin, V.~Kurtz, and H.~Lin, ``A smooth robustness measure of signal
  temporal logic for symbolic control,'' \emph{IEEE Control Systems Letters},
  vol.~5, no.~1, pp. 241--246, 2020.

\bibitem{mehdipour2019arithmetic}
N.~Mehdipour, C.-I. Vasile, and C.~Belta, ``Arithmetic-geometric mean
  robustness for control from signal temporal logic specifications,'' in
  \emph{2019 American Control Conference (ACC)}.\hskip 1em plus 0.5em minus
  0.4em\relax IEEE, 2019, pp. 1690--1695.

\bibitem{gong2019differentiable}
R.~Gong, X.~Liu, S.~Jiang, T.~Li, P.~Hu, J.~Lin, F.~Yu, and J.~Yan,
  ``Differentiable soft quantization: Bridging full-precision and low-bit
  neural networks,'' in \emph{Proceedings of the IEEE/CVF International
  Conference on Computer Vision}, 2019, pp. 4852--4861.

\bibitem{nguyen2019dynamics}
T.~V. Nguyen, R.~K. Wong, and C.~Hegde, ``On the dynamics of gradient descent
  for autoencoders,'' in \emph{The 22nd International Conference on Artificial
  Intelligence and Statistics}.\hskip 1em plus 0.5em minus 0.4em\relax PMLR,
  2019, pp. 2858--2867.

\bibitem{jiang2015study}
H.~Jiang, J.~Chen, G.~Dong, T.~Liu, and G.~Chen, ``Study on hankel matrix-based
  svd and its application in rolling element bearing fault diagnosis,''
  \emph{Mechanical Systems and Signal Processing}, vol.~52, pp. 338--359, 2015.

\bibitem{kong2016temporal}
Z.~Kong, A.~Jones, and C.~Belta, ``Temporal logics for learning and detection
  of anomalous behavior,'' \emph{IEEE Transactions on Automatic Control},
  vol.~62, no.~3, pp. 1210--1222, 2016.

\end{thebibliography}
\bibliographystyle{IEEEtran}

%
%
%%\vspace{-1cm}
%\begin{IEEEbiography}[{\includegraphics[width=1in,height=1.25in,clip,keepaspectratio]{gangchen.jpg}}]
%{First A. Author1} and the other authors may include biographies at the end of regular papers. The first paragraph may contain a place and/or date of birth (list place, then date). Next, the author's educational background is listed. The degrees should be listed with type of degree in what field, which institution, city, state or country, and year degree was earned. The author's major field of study should be lower-cased.
%
%The second paragraph uses the pronoun of the person (he or she) and not the author's last name. It lists military and work experience, including summer and fellowship jobs. Job titles are capitalized. The current job must have a location; previous positions may be listed without one. Information concerning previous publications may be included.
%
%The third paragraph begins with the author's title and last name (e.g., Dr. Smith, Prof. Jones, Mr. Kajor, Ms. Hunter). List any memberships in professional societies other than the IEEE. Finally, list any awards and work for IEEE committees and publications. If a photograph is provided, the biography will be indented around it. The photograph is placed at the top left of the biography. Personal hobbies will be deleted from the biography.
%\end{IEEEbiography}

\end{document}